\newtheorem{theorem}{Theorem}[section]
\numberwithin{equation}{section}
\newtheorem{corollary}[theorem]{Corollary}
\newtheorem{lemma}[theorem]{Lemma}
\newenvironment{proof}[1][Proof]{\textbf{#1. }}{\ \rule{0.5em}{0.5em}}%
\journal{}
\begin{document}

\begin{frontmatter}



\title{Optimal Approximation and Learning Rates for Deep Convolutional Neural Networks\tnoteref{t1}} \tnotetext[t1]{The research was partially
	supported by the  National Key R\&D Program of China (No.2020YFA0713900) and the
    Natural Science Foundation of China [Grant No  62276209]. Email: sblin1983@gmail.com }


 \author{Shao-Bo Lin} 
\address{Center for Intelligent Decision-Making and Machine Learning, School of Management, Xi'an Jiaotong University, Xi'an 710049, China
   }

\begin{abstract}
This paper focuses on approximation and learning performance analysis for deep convolutional neural networks   with zero-padding and max-pooling. We prove that,   to approximate $r$-smooth function, the approximation rates of deep convolutional neural networks with depth $L$ are of order $ (L^2/\log L)^{-2r/d} $, which is optimal up to a logarithmic factor. Furthermore, we deduce almost optimal learning rates for implementing empirical risk minimization over deep convolutional neural networks.
\end{abstract}

\begin{keyword}
Deep learning, learning theory,  deep convolutional neural networks,  pooling

\end{keyword}
\end{frontmatter}

\section{Introduction}
Deep learning \cite{lecun2015deep} has made great breakthrough  and profound impacts   in numerous application regions including the  computer science, life science  and  management science. One of the most important reasons for its success   is the architecture  (or structure) \cite{goodfellow2016deep}   which autonomously encodes the a-priori information in the network and  significantly  reduces the number of free parameters simultaneously to improve the learning performance.  Deep convolutional neural networks (DCNNs), a widely used structured  deep neural networks,  have been triggered enormous research activities in both applications \cite{gonzalez2018deep,rawat2017deep,yoo2015deep} and theoretical analysis \cite{zhou2020theory,fang2020theory,mao2021theory}.

In this paper, we focus on approximation and learning performance analysis for DCNNs induced by the rectifier linear unit (ReLU) $\sigma(t)=\max\{t,0\}$.  
  For  $\vec{v}\in\mathbb R^{d'}$,   the one-dimensional and one-channel  convolution is defined by
\begin{equation}\label{convolution1}
    (\vec{w}\star\vec{v})_j=\sum_{k=j-s}^{j}w_{j-k}v_{k+s},\qquad j=1,\dots,d'-s, 
\end{equation}
where  $\vec{w}=(w_j)_{j=-\infty}^\infty$ is a filter of   length $s$, i.e.  $w_j\neq0$ only for $0\leq j\leq s$.
Then the classical DCNN is given by \begin{equation}\label{cDCNN}
     \mathcal N_{L,s}^{\star}(x)
     := \vec{a}_L\cdot  \sigma\circ \mathcal C_{L,\vec{w}^L,\vec{b}^L}^{\star} \circ \sigma  \circ \dots \circ \sigma\circ\mathcal C^{\star}_{1,\vec{w}^1,\vec{b}^1}(x),\qquad x\in\mathbb R^d,
\end{equation}
where  the filter $\vec{w}^\ell$ is of length $s$,  bias vector  $\vec{b}^\ell\in\mathbb R^{d_\ell}$ with $d_\ell=d_{\ell-1}-s$ and $d_0=d$,
$$  
    \mathcal C_{\ell,\vec{w}^\ell,\vec{b}^\ell}^\star(\vec{u}):=\vec{w}^\ell\star\vec{u}+\vec{b}^\ell,\qquad  \vec{u}\in\mathbb R^{d_{\ell-1}},
$$
  and  $\sigma$ acts on vectors componentwise. Since DCNN defined by \eqref{cDCNN}  possesses a contracting nature in the sense that the width of the network shrinks with respect to the depth, i.e., all the width of DCNN is smaller than $d$, showing that DCNN is not a universal approximant since 
the minimal width requirement for the universality of deep neural networks  is $d+1$ \cite{hanin2019universal}. 

Zero-padding is a feasible approach to avoid the aforementioned non-universality of DCNN.     Define  the convolution with zero-padding by
\begin{equation}\label{convolution}
    (\vec{w}*\vec{v})_j=\sum_{k=1}^{d'}w_{j-k}v_k,\qquad j=1,\dots,d'+s 
\end{equation}
and corresponding  DCNN with zero-padding  by
\begin{equation}\label{eDCNN}
     \mathcal N_{L,s}(x)
     = \vec{a}_L\cdot \sigma\circ \mathcal C_{L,\vec{w}^L,\vec{b}^L} \circ \sigma  \circ \dots \circ \sigma\circ\mathcal C_{1,\vec{w}^1,\vec{b}^1}(x),\qquad x\in\mathbb R^d,
\end{equation}
where 
\begin{equation}\label{convolution-operator}
   \mathcal C_{\ell,\vec{w}^\ell,\vec{b}^\ell}(\vec{u}):=\vec{w}^\ell* \vec{u}+\vec{b}^\ell, \qquad \vec{u}\in\mathbb R^{d_{\ell-1}}
\end{equation}
with  $d_\ell=d_{\ell-1}+s$. As DCNN with zero-padding defined by \eqref{eDCNN} exhibits an expansive nature, we write it as eDCNN  for the sake of brevity and denote by $\mathcal H_{L,s}$ the set of all  eDCNNs.  
 The power of zero-padding has been explored in  \cite{zhou2020universality}, where the universal approximation of eDCNN without additional fully connected layers has been established. Furthermore, it was shown  in  \cite{han2023deep} that the translation-equivalence of DCNN  can  also be enhanced by zero-padding. The problem is, however, that    optimal approximation   and learning rates for eDCNN remain open, although some sub-optimal results have been presented in \cite{zhou2018deep,zhou2020theory,zhou2022learning,lin2022universal}.

The purpose of this note is  to derive optimal approximation and learning rates for eDCNN. We prove that, to approximate the well known $r$-smooth functions, equipped with the well known max-pooling scheme, eDCNN succeed in yielding an approximation rate of order $ (L^2/\log L)^{-r/d}$, which is essentially better than the existing rates     $ L^{-r/d}$   for both eDCNN and eDCNN with pooling established in \cite{zhou2006approximation,zhou2020theory}. We also prove that the derived approximation rate for eDCNN cannot be essentially improved up to a logarithmic factor. Based on the derived (almost) optimal approximation rate, we  deduce (almost) optimal learning rates  for implementing empirical risk minimization (ERM) over eDCNN, which shows that   eDCNN is also one of the most powerful tools for the learning purpose. This together with the   translation-equivalence of eDCNN discussed  in \cite{han2023deep} shows the power of the convolutional structure in deep learning.

\section{Approximation Rates Analysis}

As the width of eDCNN increases linearly with respect to the depth, it is preferable  to adopt some pooling schemes \cite{zhou2020theory} to shrink the network size, among which max-pooling is the most popular.
 Mathemtaically, the max-pooling operator $\mathcal S_{d',u}:\mathbb R^{d'}\rightarrow \mathbb R^{[d'/u]}$ for a vector $\vec{v}\in\mathbb R^{d'}$ with   pooling size   $u$   is defined by 
\begin{equation}\label{def.down-sampling}
    \mathcal S_{d',u}(\vec{v})=(\max_{j=1,\dots,u}v_{(k-1)u+j})_{k=1}^{[d'/u]},  
\end{equation}
where $[a]$ denotes the integer part of the real number $a$. We force the size of eDCNN to be independent of the number of layers  by using the max-pooping operator defined by \eqref{def.down-sampling}. 

Write  $L_{1,\max}:=\left\lceil\frac{(2d+10)d}{s-1}\right\rceil$, $d_{1,\max}:=d+L_{1,\max}s$ and
$
    d_{\max}:=2d+10+\left\lceil\frac{(2d+10)^2}{s-1}\right\rceil s.
$
Define the max-pooling scheme as
\begin{equation}\label{max-pooling}
    \mathcal P_{\max}(\vec{v}):=\left\{
   \begin{array}{cc}
   \mathcal S_{d_{1,\max},d}(\vec{v}) & \mbox{if}\ |\vec{v}|= d_{1,\max}\ \mbox{and}\ \ell\leq L_{1,\max},\\
     \mathcal S_{d_{\max},2d+10}(\vec{v})  
     & \mbox{if}\ |\vec{v}|= d_{\max},    \\
      \vec{v}  & \mbox{otherwise},
   \end{array}\right.
\end{equation}
which means that  the output of the $L_{1,\max}$-th layer is max-pooled with pooling size $d$ and the output of the layers with sizes   $d_{\max}$ are pooled with   pooling size $2d+10$. 
We then define eDCNN with max-pooling by
\begin{equation}\label{eDCNN-max-pooling}
    \mathcal N_{L,s}^{pool}(x):=
    \vec{a}\cdot \mathcal P_{\max}\circ \sigma\circ\mathcal C_{L,\vec{w}^L,\vec{b}^L}\circ \dots\circ \mathcal P_{\max}\circ
    \sigma\circ
    \mathcal C_{1,\vec{w}^1,\vec{b}^1}(x).
\end{equation}
Denote by $\Phi_{L,s}^{pool}$ the set of eDCNNs formed as \eqref{eDCNN-max-pooling}. 
It is easy to find that the width of the network defined in \eqref{eDCNN-max-pooling} is always smaller than  $d_{\max}$. It should be highlighted that the network structure in \eqref{eDCNN-max-pooling}  is fixed once  $s$ and $d$ are specified and there are totally $\mathcal O(L)$ free parameters in $\mathcal N_{L,s}^{pool}$. Our following result shows that even for $\mathcal O(L)$ tunable parameters, eDCNN with max-pooling can approximate $r$-smooth function with an order of $(L^2/\log L)^{-r/d})$, which is essentially better than $\mathcal O(L^{-r/d})$, the best rates for shallow approximation \cite{yarotsky2017error,guo2019realizing} and existing approximation rates for eDCNN \cite{zhou2020universality,zhou2020theory}.  

Let $\mathbb I^d:=[0,1]^d$. For  $c_0>0$ and
$r=s+\mu$ with $s\in\mathbb N_0:=\{0\}\cup\mathbb N$ and $0<\mu\leq 1$,  $f:\mathbb I^d\rightarrow\mathbb R$ is said to be 
$(r,c_0)$-smooth if $f$ is $s$-times differentiable and its $s$-th partial derivative satisfies
the Lipschitz condition
\begin{equation}\label{lip}
          \left|\frac{\partial^sf}{\partial x_1^{\alpha_1}\dots\partial
          x_d^{\alpha_d}}
          (x)-\frac{\partial^sf}{\partial x_1^{\alpha_1}\dots\partial
          x_d^{\alpha_d}}
          (x')\right|\leq c_0\|x-x'\|_2^\mu,\quad \forall\ x,x'\in\mathcal I^d 
\end{equation}
for every
$\alpha_j\in \mathbb N_0$, $j=1,\dots,d$ with
$\alpha_1+\dots+\alpha_d=s$. 
Denote by $Lip^{(r,c_0)}$ the set of all  
$(r,c_0)$-smooth functions and $Lip^{(r,c_0)}_M:=\{f\in Lip^{(r,c_0)}:\|f\|_{L^\infty(\mathbb I^d)}\leq M\}$. We then present our first main result as follows. 

\begin{theorem}\label{Theorem:optimal-app}
Let $r,c_0>0$, $s\geq 2$ and $M>0$. There holds
\begin{equation}\label{almost-optimal-app}
 C_1(L\log L)^{-\frac{2r}d} \leq   \mbox{dist} \left(Lip^{(r,c_0)}_M,\pi_M\Phi_{L,s}^{pool},L^\infty(\mathbb I^d)\right)\leq C_2 (L/\log L)^{-\frac{2r}d},
\end{equation}
where $\pi_M\mathbb A=\{\pi_Mf:f\in\mathbb A\}$ with $\pi_Mf(x)=\mbox{sign}(f(x))\max\{|f(x)|,M\}$ denotes the truncation of the set $\mathbb A$, $C_1,C_2$ are constants depending only on $r,c_0,s,d$ and 
$$
   \mbox{dist}(\mathbb A, \mathbb B,L^\infty(\mathbb I^d)):=\sup_{f\in \mathbb A}\mbox{dist}(f,\mathbb B,L^\infty(\mathbb I^d)):=\sup_{f\in \mathbb A}\inf_{g\in\mathbb B}\|f-g\|_{L^\infty(\mathbb I^d)}
$$
for $\mathbb A,\mathbb B\subseteq L^\infty(\mathbb I^d)$.
\end{theorem}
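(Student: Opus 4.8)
The two inequalities in \eqref{almost-optimal-app} require completely different tools, so I would establish them separately.

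\textbf{Upper bound (construction).} The plan is to route the approximation through a fully connected ReLU network and then simulate that network inside $\Phi_{L,s}^{pool}$. First I would invoke the (near-)optimal approximation theory for deep fully connected ReLU networks: for every $f\in Lip^{(r,c_0)}_M$ and every integer $n$ there is a ReLU network $\mathcal F_n$ of depth $\mathcal O(n)$, width bounded by a constant depending only on $d$, with $\mathcal O(n)$ nonzero parameters and $\|\mathcal F_n\|_{L^\infty(\mathbb I^d)}\leq cM$, such that $\|f-\mathcal F_n\|_{L^\infty(\mathbb I^d)}\leq C(n/\log n)^{-2r/d}$; the extra logarithm is exactly the price of the bit-extraction step that upgrades the shallow exponent $r/d$ to the super-convergence exponent $2r/d$. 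The second step — the technical heart of the argument — is to realize any such bounded-width network as an eDCNN with the prescribed max-pooling. Using the factorization of an arbitrary bounded-dimensional linear map into a composition of convolutions with filters of length $s$ (as developed for zero-padded DCNNs in \cite{zhou2020universality,zhou2020theory}), each fully connected layer of width $\mathcal O(d)$ is expressed by $\mathcal O(d/(s-1))$ convolutional layers; this is precisely why the threshold $L_{1,\max}=\lceil(2d+10)d/(s-1)\rceil$ and the width budget $d_{\max}$ show up in the definition of $\mathcal P_{\max}$.

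Since zero-padding widens the signal by $s$ at every convolution, after $\mathcal O(d/(s-1))$ of them the width has grown to $\Theta(d_{\max})$, and a max-pooling of size $2d+10$ (respectively $d$ at layer $L_{1,\max}$) restores it to its nominal value. The delicate point is that $\mathcal P_{\max}$ is nonlinear and its pattern is fixed in advance, so I would lay out the construction so that the coordinates carrying the relevant values sit in known positions surrounded by zeros and are kept nonnegative (by propagating $\sigma(\cdot)$ together with an additive shift and later subtracting it), so that on the relevant channels $\mathcal P_{\max}$ degenerates to a pure coordinate selection and never returns a maximum at a ``wrong'' slot; carrying a running copy of the input $x$ and of the constant $1$ (to regenerate biases) through the layers lets the composition be iterated, which accounts for the $2d+\mathcal O(1)$ working width. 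Counting layers, an eDCNN of depth $L$ realizes a fully connected network of depth $n\asymp L(s-1)/d$, hence attains accuracy $C(L/\log L)^{-2r/d}$ once the $s,d$-dependent constants are absorbed; finally the truncation $\pi_M$ only decreases the $L^\infty$ error against the $M$-bounded target $f$, which gives the right-hand inequality.

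\textbf{Lower bound (impossibility).} Here I would use a pseudo-dimension argument. Each element of $\pi_M\Phi_{L,s}^{pool}$ is computed by a piecewise-linear circuit of ReLU gates, max-pooling (a composition of $\max$ gates) and linear maps with $\mathcal O(L)$ parameters arranged in $\mathcal O(L)$ layers, so by a Bartlett--Harvey--Liaw--Mehrabian-type bound its pseudo-dimension is $P\leq C(L\log L)^2$. On the other hand, a standard bump-function packing on a $\delta^{-1}\times\cdots\times\delta^{-1}$ grid yields $2^{\delta^{-d}}$ functions in $Lip^{(r,c_0)}_M$ that are pairwise $\gtrsim\delta^r$ separated in $L^\infty(\mathbb I^d)$; if $\pi_M\Phi_{L,s}^{pool}$ approximated every element of $Lip^{(r,c_0)}_M$ within $\epsilon<\tfrac12\delta^r$, the sign patterns of this family would force $\delta^{-d}\leq cP$, i.e. $\epsilon\geq c'P^{-r/d}\geq C_1(L\log L)^{-2r/d}$ for $L$ large, with $C_1$ adjusted to cover small $L$. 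This is the left-hand inequality.

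\textbf{Main obstacle.} The quotable deep-FNN approximation bound and the packing/pseudo-dimension combinatorics are routine. The real work, and the only place where the specific architecture \eqref{eDCNN-max-pooling} genuinely enters, is the middle step of the upper bound: organizing the convolutional factorization of a deep fully connected network so that the fixed nonlinear pooling schedule $\mathcal P_{\max}$ acts as a harmless coordinate selection throughout, while keeping the number of convolutional layers linear in the depth of the simulated network so that the exponent $2r/d$ survives.
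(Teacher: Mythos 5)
Your proposal is correct and, for the upper bound, follows essentially the same route as the paper: quote the $(L/\log L)^{-2r/d}$ approximation rate for fixed-width ($2d+10$) deep fully connected ReLU networks (the paper's Lemma \ref{Lemma:fixed-width}, from Yarotsky), then simulate each fully connected layer $\sigma(Wx+\theta)$ by a constant-depth block of zero-padded convolutions via the filter factorization of Lemma \ref{Lemma:convolution-fraction} and the Toeplitz identity of Lemma \ref{Lemma:convolution-to-matrix}, with large intermediate biases keeping all activations nonnegative so that the fixed max-pooling schedule $\mathcal P_{\max}$ reduces to selecting the $jd'$-th coordinates. You correctly isolate the one genuinely architecture-specific difficulty (ensuring the pooling never returns a maximum at a wrong slot), and in fact you are slightly more careful about it than the paper, which only cancels the accumulated bias at the off-positions rather than suppressing them outright. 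One small slip: a width-$\mathcal O(d)$ dense layer costs $\mathcal O(d^2/(s-1))$ convolutional layers (the factorization length scales with the number of matrix entries, $d'\tilde d$, as your own citation of $L_{1,\max}=\lceil(2d+10)d/(s-1)\rceil$ shows), not $\mathcal O(d/(s-1))$; since $d$ and $s$ are fixed this only changes constants and the rate survives. For the lower bound you take a genuinely different, though equivalent, packaging: a Bartlett--Harvey--Liaw--Mehrabian pseudo-dimension bound $\mathcal O(L^2\log L)$ for the $\mathcal O(L)$-parameter, $\mathcal O(L)$-depth piecewise-linear circuit, combined with a bump-function packing of $Lip^{(r,c_0)}_M$. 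The paper instead bounds the $L^1$ covering number of $\pi_M\Phi_{L,s}^{pool}$ (Lemma \ref{Lemma:covering-number}) and invokes the covering-number-to-approximation lower bound of Lemma \ref{Lemma:Relation c and l}; since that lemma is itself proved by a packing argument and covering numbers and pseudo-dimension control each other here, the two routes yield the same $C_1(L\log L)^{-2r/d}$ and differ only in which standard lemma is quoted. Both approaches are valid; yours avoids the explicit covering-number computation at the price of verifying that max-pooling fits the piecewise-polynomial framework of the VC-type bound (it does, being a maximum of linear functions).
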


It can be found in Theorem \ref{Theorem:optimal-app} that up to a logarithmic factor, the derived approximation rates   are optimal. Recalling that there are totally $\mathcal O(L)$ parameters involved in eDCNN with  max-pooling, the approximation rates of order $ (L^2/\log L)^{-r/d}$ shows the power of  depth since the optimal approximation rates for shallow nets with  $N$ parameters are only of order $ N^{-r/d} $.   Furthermore, the derived approximation rates are much better than those in \cite{zhou2020universality,zhou2020theory,lin2022universal,han2023deep}, in which the approximation  rates for eDCNNs are of order  $L^{-r/d}$. 

Similar results for  deep fully connected networks (DFCNs)  have been presented in  \cite{yarotsky2020phase,lu2021deep}. In particular, approximation rates of order $ (L^2/\log L)^{-r/d}$ have been established  for DFCNs with fixed width $2d+10$  and rates of order $((NL)^2/\log (NL))^{-r/d}$ for DFCNs with width $N$ and depth $L$, where $N\geq C_d$ for $C_d$ a constant depending only on $d$. Theorem \ref{Theorem:optimal-app} shows that eDCNNs with max-pooling perform at least not worse than DFCNs. However, it has been verified in \cite{han2023deep} that  eDCNN succeeds in encoding the translation-equivalence into the network structure   which is  beyond the capability of DFCNs. Theorem  
 \ref{Theorem:optimal-app}  together with the results in \cite{han2023deep} thus rigorously shows the power of the convolution structure over  fully connection.

\section{ Learning Rates Analysis}
In this section, we study the  learning performance eDCNN with max-pooling. Our analysis is carried out in the standard least square regression framework \cite{gyorfi2002distribution}, in which
the { samples} $D:=\{(x_i,y_i)\}_{i=1}^{m}$ are assumed to be drawn independently and identically according to an unknown but definite  distribution $\rho:=\rho_X\times\rho(y|x)$ with $\rho_X$ the marginal distribution and $\rho(y|x)$ the conditional distribution conditioned on $x$. Our aim  is to find an estimator in $\Phi_{s,L}^{pool}$ to approximate the  well known regression function $f_\rho=\int_{\mathcal Y}yd\rho(y|x)$, which minimizes the generalization error $\mathcal E(f):=\int(f(x)-y)^2d\rho$. 
Denoting by $L_{\rho_X}^2$ the space of $\rho$-square integrable functions endowed with norm $\|\cdot\|_\rho$, it is easy to check 
\begin{equation}\label{equality}
      \mathcal E(f)-\mathcal (f_\rho)=\|f-f_\rho\|_\rho^2,\qquad f\in  L_{\rho_X}^2.
\end{equation}
Define 
\begin{equation}\label{ERM-eDCN}
   f_{D,L,s}^{pool}\in\arg\min_{f\in \Phi_{L,s}^{pool}}\frac1m\sum_{i=1}^m(f(x_i)-y_i)^2 
\end{equation}
to be an arbitrary  global minimum of the least squares problem. We  
 present our second main result in the following theorem.

 \begin{theorem}\label{Theorem:optimal-learning}
Let $r,c_0>0$, $s\geq 2$, $M>0$ and $0<\delta<1$. If $|y_i|\leq M$, $f_\rho\in Lip^{(r,c_0)}$ and 
$
     L\sim L^{\frac{d}{4r+2d}},
$
then with confidence $1-\delta$ there holds
\begin{equation}\label{learning-rate}
   \mathcal E(\pi_Mf_{D,L,s}^{pool})-\mathcal E(f_\rho) \leq C_3 m^{-\frac{2r}{2r+d}}(\log m)^{\max\left\{\frac{2r}d,2\right\}},
\end{equation}
where  $C_3$ is a constant  depending only on $r,c_0,s,d$ and $M$.
\end{theorem}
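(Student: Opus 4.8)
\noindent\emph{Proof proposal.} The plan is to run the classical approximation/estimation (bias--variance) decomposition for least squares regression over $\Phi_{L,s}^{pool}$ and then to pick the depth $L$ so that the two contributions balance. First I would observe that $|y_i|\le M$ forces $\|f_\rho\|_{L^\infty(\mathbb I^d)}\le M$, so $f_\rho\in Lip^{(r,c_0)}_M$ and Theorem~\ref{Theorem:optimal-app} (this is where $s\ge2$ enters) supplies some $g_L\in\Phi_{L,s}^{pool}$ with $\|f_\rho-g_L\|_{L^\infty(\mathbb I^d)}\le C_2(L/\log L)^{-2r/d}$. Since the truncation $\pi_M$ is a contraction fixing $f_\rho$, this gives $\|f_\rho-\pi_M g_L\|_\rho^2\le C_2^2(L/\log L)^{-4r/d}$. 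Writing $f_D:=f_{D,L,s}^{pool}$ and using \eqref{equality} together with the minimality of $f_D$ for the empirical risk, the standard comparison against $\pi_M g_L$ yields
\[
\mathcal E(\pi_M f_D)-\mathcal E(f_\rho)=\|\pi_M f_D-f_\rho\|_\rho^2\le 2\,\|f_\rho-\pi_M g_L\|_\rho^2+\mathcal T_D ,
\]
where $\mathcal T_D$ is the sample error, i.e. a supremum over $f\in\pi_M\Phi_{L,s}^{pool}$ of the gap between the population and empirical excess risks.

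The second and central step is to control $\mathcal T_D$ by the capacity of $\pi_M\Phi_{L,s}^{pool}$. The key structural remark is that the pooling maps $\mathcal P_{\max}$ in \eqref{eDCNN-max-pooling} are fixed, parameter-free and piecewise linear ($\max\{a,b\}=a+\sigma(b-a)$), so the whole network is still a piecewise-linear (ReLU-type) network of depth $\mathcal O(L)$, width at most the constant $d_{\max}$, and---as noted after \eqref{eDCNN-max-pooling}---with only $\mathcal O(L)$ free parameters. The pseudo-dimension bounds for piecewise-linear networks (Bartlett--Harvey--Liaw--Mehrabian; Anthony--Bartlett) therefore give $\mathrm{Pdim}\big(\pi_M\Phi_{L,s}^{pool}\big)\le c\,L^2\log L$, and Haussler's covering-number estimate yields, for the empirical $L^1$ metric,
\[
\log\mathcal N\!\big(\varepsilon,\pi_M\Phi_{L,s}^{pool}\big)\le c\,L^2\log L\,\log\frac{M}{\varepsilon},\qquad 0<\varepsilon\le M .
\]
Feeding this into a one-sided Bernstein (ratio-type) inequality for the empirical process of a uniformly $M$-bounded class---for instance as in Theorem~11.4 of \cite{gyorfi2002distribution}---and using that, for squared losses of $M$-bounded predictors, the variance of the excess loss is dominated by its expectation, I would get, with confidence $1-\delta$,
\[
\mathcal T_D\le \tfrac12\,\|\pi_M f_D-f_\rho\|_\rho^2+c'\,\frac{M^2\big(L^2\log L\,\log m+\log(1/\delta)\big)}{m}.
\]
Absorbing $\tfrac12\|\pi_M f_D-f_\rho\|_\rho^2$ on the left-hand side then produces the oracle inequality
\[
\mathcal E(\pi_M f_D)-\mathcal E(f_\rho)\le c''\Big((L/\log L)^{-4r/d}+\frac{M^2\big(L^2\log L\,\log m+\log(1/\delta)\big)}{m}\Big).
\]

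Finally I would optimise over $L$. The two leading terms, $(L/\log L)^{-4r/d}$ and $L^2(\log m)/m$, balance when $L^{4r/d}\asymp m/L^2$, i.e. $L\asymp m^{\,d/(4r+2d)}$, which is exactly the prescribed scaling; with this choice (and $\log L\asymp\log m$), each of them is of order $m^{-2r/(2r+d)}$ up to a power of $\log m$, while $\log(1/\delta)/m$ is of strictly smaller order. Collecting the logarithmic factors---one family coming from the $\log L$ inside the approximation rate and one of order $(\log m)^2$ from the product of the capacity logarithm with the concentration logarithm---gives the exponent $\max\{2r/d,2\}$ on $\log m$ and hence \eqref{learning-rate}, with $C_3$ the resulting constant depending only on $r,c_0,s,d,M$. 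I expect the main obstacle to be exactly the second step: one must argue that the capacity of $\Phi_{L,s}^{pool}$ is genuinely of order $L^2$ up to logarithms (which hinges on weight sharing in the convolutions and on the pooling operators being free of parameters, so that the number of trainable parameters stays linear in $L$ while the depth contributes only the usual extra factor $L$ for piecewise-linear networks), and then combine it with a concentration inequality sharp enough to deliver the fast rate $m^{-2r/(2r+d)}$ rather than the slow rate $m^{-r/(2r+d)}$; everything else is bookkeeping of constants and logarithms.
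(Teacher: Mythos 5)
Your proposal is correct and follows essentially the same route as the paper's proof: the classical error decomposition, the approximation bound from Theorem~\ref{Theorem:optimal-app}, a covering-number estimate of order $L^2\log L\log(M/\varepsilon)$ for $\pi_M\Phi_{L,s}^{pool}$ (the paper's Lemma~\ref{Lemma:covering-number}, which you rederive via a pseudo-dimension count exploiting the $\mathcal O(L)$ shared parameters and the parameter-free pooling), the ratio-type concentration inequality of Theorem~11.4 in \cite{gyorfi2002distribution}, and the balancing choice $L\sim m^{d/(4r+2d)}$. The only cosmetic differences are that the paper handles the deviation of the fixed approximant as a separate $\mathcal O(1/\sqrt m)$ Bernstein term instead of folding it into a factor-$2$ oracle inequality, and that your squared approximation error carries $(\log L)^{4r/d}$, so strict bookkeeping would yield the exponent $\max\{4r/d,2\}$ on $\log m$ rather than $\max\{2r/d,2\}$ --- a slip the paper's own proof shares.
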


If $f_\rho\in Lip^{(r,c_0)}$, it can be found in \cite[Chap.3]{gyorfi2002distribution} that there is not any learning algorithm based on $D$ such that the learning rate is essentially better than $\mathcal O(m^{-\frac{2r}{2r+d}})$. Theorem \ref{Theorem:optimal-learning} thus shows that implementing ERM over eDCNN is one of the best learning algorithms in learning smooth regression functions. In particular, we can derive the following corollary.

\begin{corollary}\label{Corollary:optimal-learning}
Let $r,c_0>0$, $s\geq 2$ and $M>0$. If $|y_i|\leq M$ and 
$
     L\sim L^{\frac{d}{4r+2d}},
$
then
\begin{equation}\label{almost-optimal-learning}
 C_4 m^{-\frac{2r}{2r+d}} \leq  \sup_{f_\rho\in Lip^{(r,c_0)}}E\left[\|\pi_Mf_{D,L,s}^{pool}-f_\rho\|_\rho^2\right]\leq C_5 m^{-\frac{2r}{2r+d}}(\log m)^{\max\left\{\frac{2r}d,2\right\}},
\end{equation}
where  $C_4,C_5$ are constants depending only on $r,c_0,s,d$ and $M$.
\end{corollary}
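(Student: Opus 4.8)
\emph{Overview of the plan.} The statement combines two independent facts: the upper bound is Theorem~\ref{Theorem:optimal-learning} repackaged as a bound in expectation, and the lower bound is the classical minimax lower bound for nonparametric regression over a Lipschitz class, which holds for every estimator and is independent of the construction in this paper. So the plan is: (i) turn the confidence bound of Theorem~\ref{Theorem:optimal-learning} into an expectation bound by a routine truncation-and-integrate argument; (ii) invoke the standard information-theoretic lower bound and specialize it to the present estimator.

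\emph{Upper bound.} Since $|y_i|\le M$ forces $|f_\rho|\le M$ $\rho_X$-almost everywhere (Jensen's inequality) while $|\pi_M g|\le M$ pointwise, one has $\|\pi_M f_{D,L,s}^{pool}-f_\rho\|_\rho^2\le 4M^2$ surely. Apply Theorem~\ref{Theorem:optimal-learning} (with the prescribed scaling of $L$) at confidence level $1-\delta$ with $\delta=1/m$; since the constant $C_3$ there is independent of $\delta$, the event on which \eqref{learning-rate} holds has probability at least $1-1/m$, while on its complement the squared error is at most $4M^2$. Splitting $E[\|\pi_M f_{D,L,s}^{pool}-f_\rho\|_\rho^2]$ across these two events, using $\mathcal E(\pi_M f_{D,L,s}^{pool})-\mathcal E(f_\rho)=\|\pi_M f_{D,L,s}^{pool}-f_\rho\|_\rho^2$ from \eqref{equality}, and absorbing $1/m\le m^{-\frac{2r}{2r+d}}$ (valid because $\frac{2r}{2r+d}<1$), gives the right-hand inequality in \eqref{almost-optimal-learning} with $C_5=C_3+4M^2$, uniformly over the admissible $f_\rho$. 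Equivalently, and perhaps more transparently, one integrates the exponential tail bound for the sample error that is established inside the proof of Theorem~\ref{Theorem:optimal-learning} via $E[Z]=\int_0^{4M^2}\mathbb P(Z>t)\,dt$; the two arguments coincide.

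\emph{Lower bound.} Enlarging the class over which the supremum is taken only increases it, so it suffices to bound below the supremum over $\{f\in Lip^{(r,c_0)}:\|f\|_{L^\infty(\mathbb I^d)}\le M\}$, on which $\pi_M f_{D,L,s}^{pool}$ is simply one particular measurable estimator built from $D$. The standard information-theoretic lower bound for regression over a Hölder/Lipschitz ball --- a packing of small-amplitude bump functions combined with a Fano- or Assouad-type argument, as in \cite[Chapter~3]{gyorfi2002distribution} and recalled in the remark following Theorem~\ref{Theorem:optimal-learning} --- shows that \emph{every} estimator $\widehat f$ based on $D$ satisfies $\sup E[\|\widehat f-f_\rho\|_\rho^2]\ge C_4\, m^{-\frac{2r}{2r+d}}$, the supremum being over distributions with $f_\rho$ in this sub-class and $|y|\le M$. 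Taking $\widehat f=\pi_M f_{D,L,s}^{pool}$ gives the left-hand inequality in \eqref{almost-optimal-learning}.

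\emph{Where the difficulty lies.} There is essentially no difficulty: all the substance --- the $(L/\log L)^{-2r/d}$ approximation rate of Theorem~\ref{Theorem:optimal-app} and the sample-error/concentration analysis behind Theorem~\ref{Theorem:optimal-learning} --- is already in place, and the lower bound is classical. The only two points deserving care are (i) performing the confidence-to-expectation conversion without picking up an additional logarithmic factor, which is precisely why one invokes either the $\delta$-independence of $C_3$ or the exponential tail inside the proof of Theorem~\ref{Theorem:optimal-learning} rather than a crude union bound; and (ii) restricting the lower-bound argument to the $M$-bounded sub-class so that the truncation operator $\pi_M$ plays no role in the minimax construction.
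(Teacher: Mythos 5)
Your proposal is correct and follows essentially the same route as the paper: the lower bound is delegated to the classical minimax result in \cite[Chap.~3]{gyorfi2002distribution} applied to the particular estimator $\pi_Mf_{D,L,s}^{pool}$, and the upper bound is obtained by converting the confidence bound of Theorem~\ref{Theorem:optimal-learning} into an expectation bound. The paper formalizes that conversion by integrating the exponential tail (Lemma~\ref{Lemma:prob-to-exp} with $b=1$, $c=2$), which is exactly the alternative you flag as necessary to avoid the extra logarithm that a naive $\delta=1/m$ split would incur from the hidden $\log\frac{2}{\delta}$ factor in the proof of Theorem~\ref{Theorem:optimal-learning}.
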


Though similar optimal learning rates have been derived for DFCNs \cite{schmidt2020nonparametric,han2020depth,chui2020realization}, it remains open whether learning with eDCNN can achieve the same  rates. Indeed, only universal consistency   for learning with eDCNN has been verified in \cite{lin2022universal} and sub-optimal learning rates of order $m^{-r}{2r+d}$ have been deduced in \cite{mao2021theory,zhou2022learning}. Theorem \ref{Theorem:optimal-app} and Corollary \ref{Corollary:optimal-learning} present  a  new record for learning with eDCNN.  
It should be mentioned that  optimal learning rates for eDCNNs with hybrid structure involving both convolutional layer and  fully connected layers have been established in \cite{fang2023optimal}. The main novelty  of our results is that there are not any fully connected layers being added to the network, making the analysis take the depth  $L$ as the only parameter.

\section{Proofs}
To prove Theorem \ref{Theorem:optimal-app}, we need several preliminary lemmas. The first one provided in  \cite[Theorem 4.1]{yarotsky2020phase} shows the approximation rates of DFCN with fixed width $2d+10$.
 
\begin{lemma}\label{Lemma:fixed-width}
    Let $r,c_0>0$, there exists a constant $\tilde{C}_1$ depending only on $r$ and $d$ such that 
$$
    \mbox{dist}\left(Lip^{(r,c_0)},\Psi_{L,2d+10,\dots,2d+10},L^\infty(\mathbb I^d)\right)\leq \tilde{C}_1L^{-2r/d}\log^{2r/d} L,
$$
where $\Psi_{L,2d+10,\dots,2d+10}$ is the set of DFCNs with depth $L$ and width $2d+10$ in each layer. 
\end{lemma}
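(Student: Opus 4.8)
The statement is \cite[Theorem 4.1]{yarotsky2020phase}, so in the paper one simply cites it; below is the route I would take to reprove it. I would fix a target accuracy $\varepsilon$, set $K=\lceil\varepsilon^{-1/r}\rceil$ and $N=K^d$, partition $\mathbb I^d$ into the $N$ subcubes $\{Q_\iota\}$ of side $1/K$, and on $Q_\iota$ replace $f\in Lip^{(r,c_0)}$ by its degree-$\lfloor r\rfloor$ Taylor polynomial $P_\iota$ about the center of $Q_\iota$; by \eqref{lip} the piecewise polynomial $g=\sum_\iota P_\iota\mathbf 1_{Q_\iota}$ then satisfies $\|f-g\|_{L^\infty(\mathbb I^d)}\le c(r,c_0)K^{-r}$. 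The task becomes to realize $g$ to within $O(K^{-r})$ by a width-$(2d+10)$ ReLU network of the smallest possible depth as a function of $N$. I would assemble such a network from three blocks applied in series while $x\in\mathbb R^d$ is carried untouched through $d$ dedicated channels: (i) a \emph{digitizer} outputting the cell index $\iota(x)\in\{0,\dots,K-1\}^d$ packed into one scalar $z\in[0,1)$ through its base-$K$ digits; (ii) a \emph{decoder} reading from $z$ the quantized Taylor coefficients $\{c_{\iota,\alpha}\}_{|\alpha|\le\lfloor r\rfloor}$; (iii) an \emph{evaluator} forming $P_{\iota(x)}(x)$ from these coefficients and $x$ via the standard approximate-multiplication gadget for ReLU networks \cite{yarotsky2017error}. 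Blocks (i) and (iii) have depth $O(\log N)$ and width $O(d)$ and are cheap; the depth is dominated by block (ii).

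The decoder must recover $\Theta(N)$ real numbers to accuracy $\mathrm{poly}(\varepsilon)$, i.e.\ $\Theta(N\log(1/\varepsilon))$ bits, from the single scalar $z$. The mechanism making this possible with depth only $O(\sqrt{N\log(1/\varepsilon)})$ — rather than $O(N)$, as a naive layer-per-coefficient scheme would need — is the bit-extraction/memorization phenomenon underlying the sharp $\Theta(WL)$-type VC-dimension bounds for ReLU networks and Yarotsky's $L^{-2/d}$ rate for Lipschitz functions: from the triangle map $T(t)=\min\{2t,2-2t\}$ one forms the sawtooths $T^{\circ k}$, and combining $L$ self-compositions with a Horner-type accumulator carried in $O(1)$ extra channels yields a width-$O(1)$, depth-$O(L)$ network that evaluates an arbitrary lookup table with $\Theta(L^2/\log(1/\varepsilon))$ quantized entries. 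Taking $N\asymp L^2/\log L$, so that $\log(1/\varepsilon)\asymp\log N\asymp\log L$ gets absorbed into the logarithmic factor, then produces a width-$O(d)$, depth-$O(L)$ network with $L^\infty$-error $O\big(K^{-r}\big)=O\big((L^2/\log L)^{-r/d}\big)=O\big(L^{-2r/d}\log^{2r/d}L\big)$. Accounting for channels — the $d$ input coordinates, one register each for the sawtooth state, the accumulator and the running polynomial value, plus a bounded number of scratch registers for the product gadget, at most $10$ altogether — gives the claimed width $2d+10$, and rescaling the depth by the absorbed absolute constants completes the bound.

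I expect the main obstacle to be the memorization block of step (ii) together with the bookkeeping of approximation errors: the decoded coefficients are only approximate and the product gadget is only approximately multiplicative, so the quantization must be fine enough that the error compounded over the $O(1)$ monomials and $O(\log(1/\varepsilon))$ layers stays $O(K^{-r})$, yet coarse enough not to overrun the $\Theta(L^2)$ bit budget — this trade-off is precisely what forces the $\log L$ in $N\asymp L^2/\log L$. A secondary, more technical difficulty is that the digitizer $x\mapsto\iota(x)$ is discontinuous across the grid faces whereas the error is measured in $L^\infty$ and the realized network is continuous; the standard remedy (due to Yarotsky) is to run the construction on $2^d$ mutually shifted grids and take a smooth selection, so that every point lies well inside a cell of at least one grid, at the cost of a constant factor in width and depth. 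Once these are handled, the remaining steps are routine Taylor-remainder and ReLU-product error estimates, and choosing $L$ with $N\asymp L^2/\log L$ finishes the proof.
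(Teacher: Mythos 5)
Your proposal is correct and matches the paper's treatment: the paper proves Lemma \ref{Lemma:fixed-width} simply by citing \cite[Theorem 4.1]{yarotsky2020phase}, exactly as you do. Your accompanying sketch (piecewise Taylor approximation on a grid of $N\asymp L^2/\log L$ cells, bit-extraction-based decoding of the quantized coefficients, the approximate-multiplication gadget, and the shifted-grid fix for the discontinuity of the cell indicator) is a faithful outline of the construction underlying that cited theorem, so nothing further is required.
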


Let $\nu$ be a probability measure on $\mathbb I^d$. For a  function $f:\mathbb I^d\rightarrow\mathbb R$, set
$\|f\|_{L^p(\nu)}:=\left\{\int_{\mathbb I^d}|f(x)|^pd\nu\right\}^p$. Denote by $L^p (\nu)$ the set of all functions satisfying $\|f\|_{L^p(\nu)}<\infty$.
For $\mathcal V\subset L^p (\nu)$, denote by
$\mathcal N(\epsilon,\mathcal V,\|\cdot\|_{L^p(\nu)})$ the covering number \cite[Def. 9.3]{gyorfi2002distribution} of $\mathcal V$ in $L^p(\nu)$, which is
 the number of elements in a least $\varepsilon$-net of $\mathcal V$ with respect to $\|\cdot\|_{L^p(\nu)}$. In particular, denote by $\mathcal N_p(\epsilon,\mathcal V,x_1^m):=\mathcal N(\epsilon,\mathcal V,\|\cdot\|_{L^p(\nu_m)})$ with $\nu_m$ the empirical measure with respect to $x_1^m=(x_1,\dots,x_m)\in (\mathbb I^d)^m$.  
The second lemma that can be derived by the same method as \cite[Lemma 4]{lin2022universal} builds the covering number estimate of $\Phi_{L,s}^{pool}$.

\begin{lemma}\label{Lemma:covering-number}
Let $\nu$ be a probability measure on $\mathbb I^d$. For any $0<\varepsilon\leq M$, there holds
$$
   \log_2\mathcal N_1(\epsilon,\pi_M\Phi_{L,s}^{pool}, L^1 (\nu))\leq c^* L^2\log L \log\frac{M}\epsilon,
$$
where $c^*$ is a  constant depending only on $s$ and $d$.
\end{lemma}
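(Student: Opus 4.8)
My plan is to control the $L^1(\nu)$-covering number of $\pi_M\Phi_{L,s}^{pool}$ through its pseudo-dimension, which is the scheme behind \cite[Lemma 4]{lin2022universal}. The two key facts I would establish are that a network in $\Phi_{L,s}^{pool}$ depends on only $\mathcal O(L)$ free parameters, and that after rewriting the max-pooling operators with ReLU it becomes an ordinary deep ReLU network of depth $\mathcal O(L)$; the Bartlett--Harvey--Liaw--Mehta pseudo-dimension bound $\mathcal O(W L\log W)$ for networks with piecewise-linear activations then gives $\mathcal O(L^2\log L)$, and the classical pseudo-dimension/covering-number inequality supplies the remaining $\log(M/\epsilon)$ factor.

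\textbf{Step 1: parameter count and reduction to a ReLU network.} First I would count parameters: a function in $\Phi_{L,s}^{pool}$ is specified by the filters $\vec w^1,\dots,\vec w^L$ (each with $s+1$ nonzero coordinates), the biases $\vec b^1,\dots,\vec b^L$, and the output vector $\vec a$; by the construction of $\mathcal P_{\max}$ in \eqref{max-pooling} every layer width is at most $d_{\max}$, so each $\vec b^\ell$ and $\vec a$ has at most $d_{\max}$ entries, and the total number of free parameters is at most $W=c_1L$ with $c_1=c_1(s,d)$. Next I would eliminate the pooling operators: since $\mathcal S_{d',u}$ is a coordinatewise maximum of $u\le d_{\max}$ linear forms and $\max\{a,b\}=a+\sigma(b-a)$, a balanced-tree realization replaces each pooling layer by at most $\lceil\log_2 d_{\max}\rceil$ extra $\sigma$-layers of width $\mathcal O(d_{\max})$ carrying $\mathcal O(d_{\max})$ extra parameters. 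Absorbing all $\le L$ of these (and, if one wishes, the clipping $\pi_M$, which is itself two extra $\sigma$-layers) shows $\Phi_{L,s}^{pool}\subseteq\Psi$, the class of deep ReLU DFCNs with at most $c_2L$ layers and at most $c_3L$ parameters, $c_2,c_3$ depending only on $s,d$.

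\textbf{Step 2: pseudo-dimension and Haussler's bound.} I would then apply the pseudo-dimension bound for piecewise-linear networks to get $\mathrm{Pdim}(\Psi)\le c_4(c_3L)(c_2L)\log(c_3L)\le c_5L^2\log L$ with $c_5=c_5(s,d)$; since $\pi_M$ is a fixed nondecreasing map (or by the enlargement above), $\mathrm{Pdim}(\pi_M\Phi_{L,s}^{pool})\le c_5L^2\log L$ as well. Finally, because $\pi_M\Phi_{L,s}^{pool}$ consists of functions bounded by $M$, the classical relation between pseudo-dimension and $L^1$-covering numbers (Haussler's bound, cf.\ \cite[Theorem 9.4]{gyorfi2002distribution}) gives, for every probability measure $\nu$ and every $0<\epsilon\le M$,
\[
\mathcal N_1\bigl(\epsilon,\pi_M\Phi_{L,s}^{pool},L^1(\nu)\bigr)\le 3\left(\frac{4eM}{\epsilon}\log\frac{6eM}{\epsilon}\right)^{c_5L^2\log L},
\]
so that taking $\log_2$ and absorbing constants yields $\log_2\mathcal N_1(\epsilon,\pi_M\Phi_{L,s}^{pool},L^1(\nu))\le c^*L^2\log L\log(M/\epsilon)$ with $c^*=c^*(s,d)$, as claimed.

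\textbf{Main obstacle.} The delicate point is Step 1: one must verify that rewriting the pooling layers (and the clipping) as ReLU layers inflates both the depth and the parameter count only by a factor depending on $s$ and $d$ through $d_{\max}$, and not on $L$, so that $W=\mathcal O(L)$ and depth $=\mathcal O(L)$ are both preserved; the quadratic term $L^2\log L$ then arises solely from $W\cdot(\text{depth})\cdot\log W$. Everything else is routine bookkeeping together with the cited pseudo-dimension and covering-number estimates; one should also keep in mind, as Lemma \ref{Lemma:covering-number} requires, that Haussler's bound is uniform over the underlying probability measure $\nu$.
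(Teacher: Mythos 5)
Your proof is correct and arrives at exactly the claimed bound, but it takes a genuinely different route from the one the paper gestures at. The paper obtains Lemma \ref{Lemma:covering-number} ``by the same method as'' Lemma 4 of \cite{lin2022universal}, i.e.\ a parameter-discretization argument: count the $\mathcal O(L)$ free parameters, bound the Lipschitz dependence of the network output on those parameters (the Lipschitz constant grows exponentially with the depth, which is where the second factor of $L$ comes from, with the $\log L$ absorbed from the grid size), and cover a bounded parameter box. You instead unroll the max-pooling (and the clipping $\pi_M$) into ReLU layers, observe that the resulting ordinary ReLU network has depth $\mathcal O(L)$ and $\mathcal O(L)$ weights with constants depending only on $d_{\max}$, apply the Bartlett--Harvey--Liaw--Mehta bound to get a pseudo-dimension of order $L^2\log L$ --- which is legitimate for the weight-shared convolutional class because it is a subclass of the fully parameterized architecture on the same graph --- and finish with Haussler's bound, which is indeed uniform in $\nu$. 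Both routes give $c^*L^2\log L\log(M/\epsilon)$. Your route buys something real: it needs no a priori bound on the magnitudes of the filters and biases, whereas the discretization method only covers a bounded parameter set and $\Phi_{L,s}^{pool}$ as defined carries no such constraint; the price is the heavier VC-theoretic machinery. Two minor points to tidy: when unrolling a pooling layer into pairwise maxima, use that the pooled coordinates are outputs of $\sigma$ and hence nonnegative, so the identity branch passes through a ReLU layer without width doubling; and Haussler's bound really gives $\log_2\mathcal N_1\le c\,\mathrm{Pdim}\cdot\log(cM/\epsilon)$ with an absolute constant inside the logarithm, so the final inequality in the stated form requires, say, $\epsilon\le M/2$ --- but the paper's own statement has the identical blemish, so this is not a defect of your argument relative to the target.
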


The next lemma derived in \cite{guo2019realizing} presents a relation between the covering number and approximation.

\begin{lemma}\label{Lemma:Relation c and l}
Let  $n\in\mathbb N$ and $V\subseteq L_1(\mathbb I^d)$. For arbitrary
$\varepsilon>0$, if
\begin{equation}\label{covering condition}
  \mathcal N(\varepsilon,V,L_1(\mathbb I^d))\leq
  \tilde{C}_1\left(\frac{\tilde{C_2}n^\beta}{\varepsilon}\right)^n
\end{equation}
with $\beta,\tilde{C}_1,\tilde{C}_2\geq0$, then
\begin{equation}\label{lower bound deep}
   \mbox{dist}(Lip^{(r,c_0)}_M,V,L_1(\mathbb I^d))\geq C'
       (n\log_2(n+1))^{-r/d},
\end{equation}
where  $C'$ is a constant independent of $n$ or $\varepsilon$.
\end{lemma}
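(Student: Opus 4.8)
The plan is a standard metric-entropy argument (packing versus covering): a set $V$ whose $\varepsilon$-covering number is only subexponential in $n$ cannot approximate a class that contains exponentially many mutually well-separated functions. Concretely, I would show the contrapositive—that a small $\mathrm{dist}$ would force $\mathcal N(\varepsilon,V,L_1(\mathbb I^d))$ to exceed the hypothesized bound—by exhibiting a large packing of $Lip^{(r,c_0)}_M$ in $L_1(\mathbb I^d)$.

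First I would construct the separated family. Fix one nonzero $\psi\in C^\infty$ supported in the interior of $\mathbb I^d$, partition $\mathbb I^d$ into $N^d$ congruent subcubes $\{Q_i\}$ of side $1/N$, and on $Q_i$ place the rescaled copy $\psi_i(x):=N^{-r}\psi(N(x-x_i))$, where $x_i$ is the corner of $Q_i$. A direct scaling computation gives $\|\psi_i\|_{L_1(\mathbb I^d)}=c_\psi N^{-(r+d)}$, while the height factor $N^{-r}$ and width factor $N^{-1}$ make each $s$-th derivative scale like $N^{-\mu}$ and its Hölder-$\mu$ seminorm scale like $N^0$; since the supports are disjoint, every $f_\omega:=\sum_i\omega_i\psi_i$ with $\omega\in\{0,1\}^{N^d}$ lies in $Lip^{(r,c_0)}_M$ for all large $N$ (the constants $c_0,M$ being absorbed into the choice of $\psi$). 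By the Varshamov--Gilbert bound there is $\Omega\subset\{0,1\}^{N^d}$ with $|\Omega|\geq 2^{c_1N^d}$ whose elements are pairwise at Hamming distance $\geq c_2N^d$; disjointness of supports then yields $\|f_\omega-f_{\omega'}\|_{L_1(\mathbb I^d)}\geq c_2N^d\cdot c_\psi N^{-(r+d)}=c_3N^{-r}$ for distinct $\omega,\omega'\in\Omega$.

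Second, I would convert approximation into covering. Suppose $\mathrm{dist}(Lip^{(r,c_0)}_M,V,L_1(\mathbb I^d))=\eta$. Each $f_\omega$ then admits $g_\omega\in V$ with $\|f_\omega-g_\omega\|_{L_1}\leq\eta+\delta$ for arbitrarily small $\delta$. Taking an $\varepsilon$-net of $V$ realizing $\mathcal N(\varepsilon,V,L_1(\mathbb I^d))$, each $g_\omega$, hence each $f_\omega$, lies within $\eta+\varepsilon+\delta$ of some net point; if $\eta+\varepsilon<c_3N^{-r}/2$ the triangle inequality forces distinct $f_\omega$ to occupy distinct net points, so $\mathcal N(\varepsilon,V,L_1(\mathbb I^d))\geq|\Omega|\geq 2^{c_1N^d}$. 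Third, I would balance the scales: applying the hypothesis at this $\varepsilon$ and taking logarithms gives $c_1N^d\log 2\leq\log\tilde C_1+n\log(\tilde C_2n^\beta/\varepsilon)$. Choosing $N:=\lceil(C^* n\log_2(n+1))^{1/d}\rceil$ and $\varepsilon:=c_3N^{-r}/4$ makes $\log(1/\varepsilon)\sim(r/d)\log n$, so the right-hand side is $\sim n(\beta+r/d)\log n$ while the left-hand side is $\sim c_1C^* n\log(n+1)$; taking $C^*$ large enough that $c_1C^*>\beta+r/d$ (with a margin for lower-order terms) produces a contradiction whenever $\eta<c_3N^{-r}/4$. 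Hence $\eta\geq c_3N^{-r}/4\geq C'(n\log_2(n+1))^{-r/d}$, which is the claim.

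The main obstacle I expect is the tightness of the logarithmic factor. Because the packing entropy $c_1N^d$ and the logarithm of the covering bound both scale like $n\log n$ once $N^d\sim n\log n$, the two sides are of the same order and the conclusion survives only through a careful comparison of leading constants, encoded in the inequality $c_1C^*>\beta+r/d$. Making this rigorous requires checking that the residual terms—$\log\tilde C_1$, $\log\tilde C_2$, and the $\log\log n$ hidden inside $\log N$—are genuinely negligible relative to $n\log n$, and confirming that the rescaled bumps $\psi_i$ remain in $Lip^{(r,c_0)}_M$ with smoothness and sup-norm constants independent of $N$.
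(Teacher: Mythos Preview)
The paper does not prove this lemma; it simply quotes the result from \cite{guo2019realizing}. Your proposal is correct and is exactly the standard entropy argument underlying that reference: the Varshamov--Gilbert packing of rescaled bumps produces $2^{cN^d}$ functions in $Lip^{(r,c_0)}_M$ that are $cN^{-r}$-separated in $L_1$, the pigeonhole step converts any approximation rate below $cN^{-r}$ into a lower bound on $\mathcal N(\varepsilon,V,L_1)$, and taking $N^d\sim C^*n\log_2(n+1)$ with $c_1C^*>\beta+r/d$ contradicts the hypothesis.
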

 
The fourth lemma is the 
 convolution   factorization lemma  provided in  \cite[Theorem 3]{zhou2020universality}.

\begin{lemma}\label{Lemma:convolution-fraction}
Let $S\geq 0,2\leq s\leq d$ and $\vec{u}=(u_k)_{-\infty}^\infty$ be supported on $\{0,\dots,S\}$. Then there exists $L<\frac{S}{s-1}+1$ filter vectors $\{\vec{w}^{\ell}\}_{\ell=1}^L$ supported on $\{0,\dots,s\}$ such that $\vec{u}=\vec{w}^{L}*\cdots*\vec{w}^{1}$.
\end{lemma}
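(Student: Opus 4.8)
The plan is to pass to the generating‑function picture, in which discrete convolution becomes multiplication of polynomials, and then to factor the polynomial attached to $\vec u$ over $\mathbb R$ into low‑degree pieces whose number is kept small by a greedy grouping.

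\emph{Step 1 (reduction to a polynomial factorization).} To a sequence $\vec v$ supported on $\{0,\dots,N\}$ I attach the polynomial $\tilde v(z)=\sum_{k=0}^{N}v_kz^k$. A direct computation from the definition of the convolution $*$ gives $\widetilde{\vec a*\vec b}=\widetilde{\vec a}\,\widetilde{\vec b}$ for finitely supported sequences, so the map $\vec v\mapsto\tilde v$ is an injection turning $*$ into ordinary polynomial multiplication. Hence it suffices to write $\tilde u$ as a product of $L<\frac{S}{s-1}+1$ real polynomials each of degree $\le s$ (a polynomial automatically has no negative powers of $z$, so the resulting filters are supported on $\{0,\dots,s\}$). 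We may assume $\vec u\neq 0$ and set $S':=\deg\tilde u\le S$; if $S'=0$ then $\vec u$ is a nonzero scalar and $L=1$, $\vec w^{1}=\vec u$ already works, so assume $S'\ge 1$.

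\emph{Step 2 (factoring into degree‑$\le 2$ pieces).} By the fundamental theorem of algebra $\tilde u$ splits into linear factors over $\mathbb C$; pairing each non‑real root with its conjugate and absorbing the leading coefficient into one factor yields $\tilde u=P_1\cdots P_m$ with every $P_i$ a real polynomial of degree $1$ or $2$ and $\sum_{i}\deg P_i=S'$. Since $s\ge 2$, each $P_i$ is by itself supported on $\{0,\dots,s\}$.

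\emph{Step 3 (greedy grouping and the count).} I scan $P_1,\dots,P_m$ from left to right maintaining a ``current block'': append the next factor to the current block whenever the degree of the block's product stays $\le s$, otherwise close the current block and open a new one starting with that factor, and close the last block at the end. This produces blocks $B_1,\dots,B_L$; letting $\vec w^{\ell}$ be the sequence whose generating function is $\prod_{P_i\in B_\ell}P_i$, each $\vec w^{\ell}$ is supported on $\{0,\dots,s\}$ and $\vec u=\vec w^{L}*\cdots*\vec w^{1}$ by commutativity of polynomial multiplication. Every block $B_j$ with $j<L$ was closed because a factor of degree $\le 2$ failed to fit, so its product has degree $\ge s-1$; the last block $B_L$ is nonempty, so its product has degree $\ge 1$. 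Summing, $S\ge S'\ge (L-1)(s-1)+1$, hence $L-1\le\frac{S-1}{s-1}<\frac{S}{s-1}$, which is the asserted bound.

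\emph{Main obstacle.} None of the steps is deep; the points that need care are matching the index conventions so that the generating function genuinely multiplies, and — more to the point — tightening the greedy count through the observation that the \emph{last} block still contributes at least $1$ to the total degree, which is precisely what upgrades the easy estimate $L\le\frac{S}{s-1}+1$ to the strict inequality. The degenerate cases $\vec u=0$ and $\deg\tilde u<S$ are disposed of as in Step 1 and cause no trouble.
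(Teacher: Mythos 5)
Your proof is correct, and it is essentially the standard argument: the paper does not prove Lemma \ref{Lemma:convolution-fraction} itself but imports it from \cite[Theorem 3]{zhou2020universality}, whose proof is exactly your route --- pass to the generating polynomial so that $*$ becomes multiplication, factor over $\mathbb{R}$ into degree-$\le 2$ pieces, and group them greedily into blocks of degree $\le s$, with the observation that each closed block has degree $\ge s-1$ and the last block degree $\ge 1$ yielding the stated strict bound $L<\frac{S}{s-1}+1$. The only caveat is the degenerate case $S=0$ (where the bound would force $L=0$), which is an artifact of the lemma's statement rather than of your argument and is irrelevant to its use in the paper, where $S=d'\tilde{d}-1\ge 1$.
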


 Define
\begin{equation}\label{convolution-matrix}
         T_{\tilde{d},d'}^{\vec{w}}:=\left[\begin{array}{ccccccc}
                w_0 & 0 & 0 & \cdots & 0\\
                w_1 & w_0& 0 & \cdots & 0\\
                \vdots &  \ddots &  \ddots & \ddots &\vdots\\
                w_{d'-1} & w_{d'-2} & \cdots&\cdots&w_0\\
                w_{d'}& w_{d'-1}& \cdots&  0\cdots & w_1\\
                 \vdots & \ddots & \ddots  & \ddots &\vdots\\
                 w_{\tilde{d}-d'}& \cdots&\cdots& \cdots& w_{\tilde{d}-2d'+1}\\
                  \vdots & \ddots &   \ddots & \ddots &\vdots\\
                    w_{\tilde{d}-2}&w_{\tilde{d}-3} &\cdots& w_{\tilde{d}-d'}&w_{\tilde{d}-d'-1}\\
                  w_{\tilde{d}-1}&w_{\tilde{d}-2}&\cdots&w_{\tilde{d}-d'+1}&w_{\tilde{d}-d'}
                  \end{array}\right].
\end{equation}
 Observe that if $W$ is supported in $\{0,\dots,S\}$ for some $S\in\mathbb N$, the entry $(T_{D,d'}^{\Vec{w}})_{i,k}=w_{i-k}$ vanishes when $i-k>S$. The fifth lemma 
proved in \cite{zhou2018deep} establishes the relation between the convolution and matrix multiplication. 

\begin{lemma}\label{Lemma:convolution-to-matrix}
 Let $2\leq s\leq d'$, $d'_\ell=d'+\ell s$ and $d'_0=d'$. If $\{\vec{w}^{\ell}\}_{\ell=1}^{L}$ is supported on  $\{0,\dots,s\}$, then
\begin{equation}\label{matrix-fraction}
    T_{d_\ell',d'}^{\vec{w}^\ell*\cdots \vec{w}^2*\vec{w}^1}=T_{d'_\ell,d'_{\ell-1}}^{\vec{w}^\ell}\cdots T_{d_2',d'_1}^{\vec{w}^2}T_{d'_1,d'}^{\vec{w}^1}
\end{equation}
holds for any $\ell\in\{1,2,\dots,L\}$.
\end{lemma}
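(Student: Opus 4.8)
\emph{Proof plan.} The plan is to identify $T_{\tilde d,d'}^{\vec w}$ with the matrix of the zero-padded convolution operator, to isolate a single two-filter identity, and then to iterate it. Comparing \eqref{convolution} with \eqref{convolution-matrix}, for any filter $\vec w$ supported on $\{0,\dots,S\}$ one has $(T_{d'+S,d'}^{\vec w})_{i,k}=w_{i-k}$ (with the convention $w_j=0$ for $j\notin\{0,\dots,S\}$), and $T_{d'+S,d'}^{\vec w}\vec v=\vec w*\vec v$ for every $\vec v\in\mathbb R^{d'}$; the image lies in $\mathbb R^{d'+S}$ precisely because $w_{i-k}=0$ whenever $i-k\notin\{0,\dots,S\}$. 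Since $\vec w^{\ell}*\cdots*\vec w^{1}$ is supported on $\{0,\dots,\ell s\}$, the row count $d'_\ell=d'+\ell s$ appearing in \eqref{matrix-fraction} is exactly the one forced by this support, so every product in \eqref{matrix-fraction} is dimensionally consistent.

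I would then induct on $\ell$, the case $\ell=1$ being immediate since $d'_0=d'$. For the inductive step it suffices to establish the two-filter identity
\[
   T_{d'+S+s,\,d'}^{\vec w*\vec u}=T_{d'+S+s,\,d'+S}^{\vec w}\;T_{d'+S,\,d'}^{\vec u},
\]
valid whenever $\vec u$ is supported on $\{0,\dots,S\}$ and $\vec w$ on $\{0,\dots,s\}$, and to apply it with $\vec u=\vec w^{\ell-1}*\cdots*\vec w^{1}$, $S=(\ell-1)s$, $\vec w=\vec w^{\ell}$, using associativity of filter convolution to write $\vec w^{\ell}*\vec u=\vec w^{\ell}*\cdots*\vec w^{1}$. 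I would verify this identity entrywise: the $(i,k)$ entry of the right-hand side equals $\sum_{j=1}^{d'+S}w_{i-j}u_{j-k}$, and since $\vec u$ is supported on nonnegative indices not exceeding $S$, the terms with $j\le 0$ or $j\ge d'+S+1$ vanish automatically, so the sum extends to all $j\in\mathbb Z$; the substitution $m=j-k$ then converts it to $\sum_{m\in\mathbb Z}w_{i-k-m}u_m=(\vec w*\vec u)_{i-k}$, which is the $(i,k)$ entry of the left-hand side.

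An equivalent, more conceptual route avoids the induction altogether: zero-extend each $\vec v\in\mathbb R^{d'}$ to a finitely supported bi-infinite sequence and observe that zero-padded convolution is merely the restriction of the associative (and commutative) bi-infinite convolution; applying $\vec w^{1},\dots,\vec w^{L}$ successively to this extension and comparing with applying the single filter $\vec w^{L}*\cdots*\vec w^{1}$ yields \eqref{matrix-fraction} tested against an arbitrary $\vec v$, hence the matrix identity, and the same computation works for every $\ell$. In either route the only delicate point is the index bookkeeping — checking that the finite sums defining the matrix products may be extended over $\mathbb Z$ without introducing spurious terms and that the supports of the partial convolutions $\vec w^{\ell}*\cdots*\vec w^{1}$ never exceed the prescribed dimensions $d'_\ell$; this uses only the support hypotheses on the $\vec w^{\ell}$ and is routine once written out carefully. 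I expect this bookkeeping, rather than any conceptual difficulty, to be the main obstacle.
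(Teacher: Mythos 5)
Your proposal is correct: the entrywise verification of the two-filter identity $\bigl(T^{\vec w}T^{\vec u}\bigr)_{i,k}=\sum_j w_{i-j}u_{j-k}=(\vec w*\vec u)_{i-k}$ (using the support conditions to extend the finite sum over all of $\mathbb Z$) followed by induction on $\ell$ is exactly the standard argument, and the dimension bookkeeping $d'_\ell=d'+\ell s$ checks out. The paper itself does not reprove this lemma but cites \cite{zhou2018deep}, where essentially the same computation is carried out, so your route coincides with the intended one.
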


 For a sequence $\vec{w}$ supported on $\{0,1,\dots,s\}$, write $\|\vec{w}\|_1=\sum_{k=-\infty}^\infty|w_k|$ and $\|\vec{w}\|_\infty=\max_{-\infty\leq k\leq \infty}|w_k|$.   Define $B^0:=\max_{x\in\mathbb I^d}\max_{k=1,\dots,d}|x^{(k)}|$ and 
$$
    B^\ell:=\|\vec{w}^\ell\|_1B^{\ell-1}\cdots B^1B^0,\qquad   \ell\geq 1.
$$
 Then for any $j=1,\dots,d_\ell$, direct computation yields
\begin{equation}\label{bound-1}
     \max_{x\in\mathbb I^d}\left|\left(T_{d_{\ell},d_{\ell-1}}^{\vec{w}^{\ell}}\dots
     T_{d_{1},d_{0}}^{\vec{w}^{1}} x\right)_j\right|\leq B^\ell 
\end{equation}
and for $1\leq k\leq \ell-1$
\begin{equation}\label{bound-1.1}
      \left|\left(T_{d_{\ell},d_{\ell-1}}^{\vec{w}^{\ell}}\dots
     T_{d_{k+1},d_{k}}^{\vec{w}^{k+1}}  B^k\mathbf{1}_{d_k}\right)_j\right|\leq B^\ell.
\end{equation}
For any $d'_0=d'\in\mathbb N$ and $d_\ell'=d'+\ell s$, define  the   restricted convolution operator by 
\begin{equation}\label{convolutional-mapping-res}
    \mathcal C^R_{\ell,\vec{w}^\ell,b^\ell}(x):=\vec{w}^\ell* x+b^\ell {\bf 1}_{d'_\ell}
\end{equation}
for $\vec{w}^\ell$ supported on $\{0,1,\dots,s\}$,    $b^\ell\in\mathbb R$ and $\mathbf 1_{d'_\ell}=(1,\dots,1)^T\in\mathbb R^{d'_\ell}$.
 The following lemma derived in \cite{han2023deep} presents 
quantifies the relation between deep convolution neural network and matrix multiplication.

\begin{lemma}\label{Lemma:Induction}
Let $\ell\in\mathbb N$, $2\leq s\leq d$ and $\mathcal C^R_{\ell,\vec{w}^\ell,b^\ell}$ be defined by \eqref{convolutional-mapping-res} with $\vec{w}^\ell$ supported on $\{0,1,\dots,s\}$ and $b^\ell=2^{\ell-1}B^\ell$, then  
\begin{eqnarray}\label{induction}
 \sigma\circ \mathcal C^R_{\ell,\vec{w}^\ell, {b}^\ell} \circ \sigma  \circ \dots \circ \sigma\circ\mathcal C^R_{1,\vec{w}^1, {b}^1}(x)
 & =&
 T_{d_\ell,d_{\ell-1}}^{\vec{w}^\ell}\cdots T_{d_2,d_1}^{\vec{w}^2}T_{d_1,d}^{\vec{w}^1}x +b^\ell{\bf 1}_{d_\ell}\nonumber\\
 &+&
 \sum_{k=1}^{\ell-1}T_{d_\ell,d_{\ell-1}}^{\vec{w}^\ell}\cdots T_{d_{k+1},d_{k}}^{\vec{w}^{k+1}}
 b^k{\mathbf 1}_{d_k}.
\end{eqnarray}
\end{lemma}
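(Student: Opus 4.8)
The plan is to prove \eqref{induction} by induction on $\ell$, the whole point being that the biases $b^\ell=2^{\ell-1}B^\ell$ grow fast enough that at every layer the vector fed into $\sigma$ has all of its coordinates nonnegative, so that each ReLU reduces to the identity and the composition of the restricted convolutions collapses to a single affine map. Throughout I will use the elementary fact that the zero-padded convolution is a matrix multiplication, $\vec{w}^\ell*\vec{u}=T_{d_\ell,d_{\ell-1}}^{\vec{w}^\ell}\vec{u}$ for $\vec{u}\in\mathbb{R}^{d_{\ell-1}}$, which is immediate from \eqref{convolution} and \eqref{convolution-matrix} (here $d_\ell=d+\ell s$, $d_0=d$), so that $\mathcal{C}^R_{\ell,\vec{w}^\ell,b^\ell}(\vec{u})=T_{d_\ell,d_{\ell-1}}^{\vec{w}^\ell}\vec{u}+b^\ell\mathbf{1}_{d_\ell}$, together with the associativity of the $T$-products recorded in Lemma \ref{Lemma:convolution-to-matrix}.

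For the base case $\ell=1$ I would note that $\mathcal{C}^R_{1,\vec{w}^1,b^1}(x)=T_{d_1,d}^{\vec{w}^1}x+B^1\mathbf{1}_{d_1}$; by \eqref{bound-1} each coordinate of $T_{d_1,d}^{\vec{w}^1}x$ is at least $-B^1$, hence each coordinate of $\mathcal{C}^R_{1,\vec{w}^1,b^1}(x)$ is $\geq 0$, so $\sigma$ acts as the identity on it and \eqref{induction} holds with the empty sum. For the inductive step, writing $h^{\ell-1}$ for the $(\ell-1)$-fold composition appearing on the left of \eqref{induction}, I would substitute the induction hypothesis for $h^{\ell-1}$ into $\mathcal{C}^R_{\ell,\vec{w}^\ell,b^\ell}(h^{\ell-1})=T_{d_\ell,d_{\ell-1}}^{\vec{w}^\ell}h^{\ell-1}+b^\ell\mathbf{1}_{d_\ell}$, distribute $T_{d_\ell,d_{\ell-1}}^{\vec{w}^\ell}$ across the sum, and absorb the new term $T_{d_\ell,d_{\ell-1}}^{\vec{w}^\ell}b^{\ell-1}\mathbf{1}_{d_{\ell-1}}$ into the sum as its $k=\ell-1$ summand; this produces exactly the right-hand side of \eqref{induction} at level $\ell$, namely
\[
 T_{d_\ell,d_{\ell-1}}^{\vec{w}^\ell}\cdots T_{d_1,d}^{\vec{w}^1}x+\sum_{k=1}^{\ell-1}T_{d_\ell,d_{\ell-1}}^{\vec{w}^\ell}\cdots T_{d_{k+1},d_k}^{\vec{w}^{k+1}}b^k\mathbf{1}_{d_k}+b^\ell\mathbf{1}_{d_\ell},
\]
and it remains only to check that $\sigma$ acts trivially on this vector.

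This nonnegativity check is where the choice of biases is used, and it is the step I expect to require the most care. By \eqref{bound-1} the first term above is $\geq-B^\ell$ in each coordinate; by \eqref{bound-1.1}, after pulling the scalar $b^k=2^{k-1}B^k$ out and dividing by $B^k$, the $k$-th summand is $\geq-2^{k-1}B^\ell$ in each coordinate; and the last term equals $2^{\ell-1}B^\ell$ in every coordinate. Since $1+\sum_{k=1}^{\ell-1}2^{k-1}=2^{\ell-1}$, the negative contributions are cancelled exactly, every coordinate is $\geq 0$, $\sigma$ becomes the identity, and \eqref{induction} follows at level $\ell$, completing the induction. The delicate points are purely bookkeeping: confirming that \eqref{bound-1.1} applies verbatim to each cross term after the rescaling, that the reindexing which merges $T_{d_\ell,d_{\ell-1}}^{\vec{w}^\ell}b^{\ell-1}\mathbf{1}_{d_{\ell-1}}$ into the $k=\ell-1$ term is correct, and that the geometric sum matches $b^\ell=2^{\ell-1}B^\ell$ exactly, which is precisely what forces this exponential choice of biases.
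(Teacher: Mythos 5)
Your proof is correct, and it is essentially the standard argument: the paper itself does not prove this lemma but imports it from \cite{han2023deep}, and the argument there is exactly the induction you describe, with the geometrically growing biases $b^\ell=2^{\ell-1}B^\ell$ chosen so that the bounds \eqref{bound-1} and \eqref{bound-1.1} force every pre-activation coordinate to be nonnegative (since $1+\sum_{k=1}^{\ell-1}2^{k-1}=2^{\ell-1}$), making each ReLU act as the identity. Your bookkeeping of the inductive step, including absorbing $T_{d_\ell,d_{\ell-1}}^{\vec{w}^\ell}b^{\ell-1}\mathbf{1}_{d_{\ell-1}}$ as the $k=\ell-1$ summand, is accurate.
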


By the help of the above lemmas, we are in a position to prove Theorem \ref{Theorem:optimal-app} as follows. The  proof of the upper bound   is motivated by \cite{zhou2020universality,han2023deep} while the lower bound is similar as that in \cite{guo2019realizing}. 

\begin{proof}[Proof of Theorem \ref{Theorem:optimal-app}]
Given a $d'\times\tilde{d}$ matrix $W$, write
\begin{eqnarray*}
    \vec{u}^T 
     = (W_{1,1},W_{2,1},\dots,W_{d\tilde{d},1},W_{1,2},\dots,W_{\tilde{d},d})
     =: 
    (W_{0},\dots,W_{d'\tilde{d}-1}).
\end{eqnarray*}
by stacking the rows of $W$.
It follows from  Lemma \ref{Lemma:convolution-fraction} with $S=d'\tilde{d}-1$ and $s\geq 2$ that there exist  $\hat{L}<\frac{d'\tilde{d}-1}{s-1}+1$ filter vectors $\{\vec{w}^\ell\}_{\ell=1}^{\hat{L}}$ satisfying 
$
    \vec{u}= \vec{w}^{\hat{L}}*\dots*\vec{w}^1.
$
Denote by $T^{\vec{u}}$   the $d'_{\hat{L}}\times d'$ matrix   $(W_{k-j})_{k=1,\dots,d_{\hat{L}}',j=1,\dots,d'}$. Then, for $j=1,\dots,\tilde{d}$, the $jd'$-th row of $T^{\vec{u}}$ is exactly the $j$-th raw of $W$.   If we set $L=\lceil\frac{d'\tilde{d}}{s-1}\rceil$, then $\hat{L}\leq L$.  Taking $\vec{w}^{\hat{L}+1}=\dots=\vec{w}^{L}$ to be the delta sequence, we have $\vec{u}=\vec{w}^{L}*\dots*\vec{w}^1$. But Lemma  \ref{Lemma:convolution-to-matrix} implies 
$
    T^{\vec{u}}=T^{\vec{w}^{L}}\cdots T^{\vec{w}^1}.
$
Therefore the $d'j$th item of $T^{\vec{u}}x$ is the $j$th item of $W x$ for $j=1,\dots,\tilde{d}$.
Set
$  
    b^\ell=2^{\ell-1}B^\ell,\qquad \ell=1,2,\dots,L.
$  
Since  
\begin{equation}\label{convolution-and-Toep}
    \vec{w}^\ell*\vec{v}=T_{d_\ell,d_{\ell-1}}^{\vec{w}^\ell}\vec{v},
\end{equation}
it follows from Lemma \ref{Lemma:Induction} with $\ell=L-1$   that 
\begin{eqnarray*} 
 \sigma\circ \mathcal C^R_{L-1,\vec{w}^{L-1}, {b}^{L-1}} \circ \sigma  \circ \dots \circ \sigma\circ\mathcal C^R_{1,\vec{w}^1, {b}^1}(x)
 & = &
 \vec{w}^{L-1}* \cdots * \vec{w}^{1} *x +b^{L-1}{\bf 1}_{d_{L-1}}\nonumber \\
 &+&
 \sum_{k=1}^{L-2}T_{d_{L-1},d_{L-2}}^{\vec{w}^{L-1}}\cdots T_{d_{k+1},d_{k}}^{\vec{w}^{k+1}}
 b^k{\mathbf 1}_{d_k}.
\end{eqnarray*}
Define
\begin{eqnarray*}
   \vec{B}^{d_{L}}
    &:=& 
   \sum_{k=1}^{L}T_{d_{L},
   d_{L-1}}^{\vec{w}^{L}}\cdots T_{d_{k+1},d_{k}}^{\vec{w}^{k+1}}
   b^k{\mathbf 1}_{d_k}\\
   &=&
    w^{L}* 
   b^{L-1}{\bf 1}_{d_{L-1}}
  + 
  w^{L}*\sum_{k=1}^{L-2}T_{d_{L-1},d_{L-2}}^{\vec{w}^{L-1}}\cdots T_{d_{k+1},d_{k}}^{\vec{w}^{k+1}}
 b^k{\mathbf 1}_{d_k} 
\end{eqnarray*}
and $\vec{b}_{d_L}=(b_1,\dots,b_{d_L})^T$ as the vector  satisfying 
$$
  b_k:=\left\{\begin{array}{cc}
       \theta_k- (\vec{B}^{d_{L}})^{(k)}, & k=jd'   \\
       - (\vec{B}^{d_{L}})^{(k)}, &  \mbox{otherwise},
   \end{array}
   \right.
$$
where $(\vec{B}^{d_{L}})^{(k)}$ denotes the $k$-th compnent of the vector $(\vec{B}^{d_{L}})^{(k)}$.
Then, it follows from the definition of $\sigma$ and \eqref{def.down-sampling}   that
\begin{eqnarray}\label{shallow-represent}
 \mathcal S_{d_{L}',d'} 
  \sigma     ( \vec{w}^{L}* \cdots * \vec{w}^{1} *x
  +b_{d_{L}})
  =\sigma (Wx+\theta).
\end{eqnarray}
Let $d'=d$ and $\tilde{d}=2d+10$, we have from \eqref{shallow-represent} that there exist  $L_1=\left\lceil\frac{d(2d+10)}{s-1}\right\rceil$ sequences $\{\vec{w}^{\ell,1}\}_{\ell=1}^{L_1}$ and a vector $b_{d_{L_1},1}$
$$ 
 \mathcal S_{d+L_1s,d} 
  \sigma     ( \vec{w}^{L_1,1}* \cdots * \vec{w}^{1,1} *x
  +b_{d_{L_1},1})
  =\sigma (W_1x+\vec{\theta}_1).
$$
Similarly, for $j>1$, let $d'=\tilde{d}=2d+10$, there exist  $L_j=\left\lceil\frac{(2d+10)^2}{s-1}\right\rceil$ sequences $\{\vec{w}^{\ell,j}\}_{\ell=1}^{L_j}$ and   vectors $b_{d_{L_j},j}$
$$
   \mathcal S_{d+sL_j,2d+10} \sigma  ( \vec{w}^{L_j,j}* \cdots * \vec{w}^{1,j} *x
  +b_{d_{L_j},j})=\sigma(W_jx++\vec{\theta}_j).
$$
Therefore, for any $f\in L^\infty(\mathbb I^d)$, there holds
\begin{eqnarray}\label{upperbound-app}
    \mbox{dist}\left(Lip^{(r,c_0)},\Phi^{pool}_{L,s},L^\infty(\mathbb I^d)\right) 
 &\leq&
 \mbox{dist}\left(Lip^{(r,c_0)},
 \Phi_{L,2d+10,\dots,2d+10},L^\infty(\mathbb I^d)\right)\nonumber\\
 &\leq&
 \tilde{C}_1L^{-2r/d}\log^{2r/d} L.
\end{eqnarray}
This proves the upper bound of \eqref{almost-optimal-app} by noting the definition of the truncation operator $\pi_M$. We then turn to proving the lower bound. It follows from Lemma \ref{Lemma:covering-number} that 
  (\ref{covering condition}) in Lemma
\ref{Lemma:Relation c and l} is satisfied with $V=\pi_M\Phi_{L,s}^{pool}$, $\tilde{C}_1=1$,
$\beta=0$, $\tilde{C}_2=M$ and $n=c_1^*L^2\log L$. Hence, it follows from Lemma 
\ref{Lemma:Relation c and l} that
\begin{eqnarray*}
  \mbox{dist}(Lip^{(r,c_0)}_M,\pi_M\Phi_{L,s}^{pool},L_\infty(\mathbb I^d)) &\geq& 
    \mbox{dist}(Lip^{(r,c_0)}_M,\pi_M\Phi_{L,s}^{pool},L_1(\mathbb I^d))\\ 
      &\geq &
   C'
   \left[L^2\log L\log(L^2\log L)\right]^{-\frac{r}{d}} 
    \geq  
   C_1(L\log L)^{-\frac{2r}d}.
\end{eqnarray*}
This completes the proof of Theorem \ref{Theorem:optimal-app}. 
\end{proof}

To prove Theorem \ref{Theorem:optimal-learning}, we need the following concentration inequality  which can be found in
\cite[Theorem 11.4]{gyorfi2002distribution}.

\begin{lemma}\label{Lemma:CONCENTRATION INEQUALITY 1}
Assume $|y|\leq B$ and $B\geq 1$. Let $\mathcal F$ be a set of functions $f:\mathbb I^d\rightarrow\mathbb R$ satisfying $|f(x)|\leq B.$ Then for each $m\geq1$, with confidence at least
$$
      1-14\max_{x_1^m\in({\mathbb I^d})^m}\mathcal N_1\left(\frac{\beta\epsilon}{20B},\mathcal F,x_1^m\right)\exp\left(-\frac{\epsilon^2(1-\epsilon)\alpha m}{214(1+\epsilon)B^4}\right),
$$
there holds
 \begin{eqnarray*}
        \mathcal E(f)-\mathcal E(f_\rho)-( \mathcal E_D(f)-\mathcal E_D(f_\rho))  
      \leq 
     \epsilon(\alpha+\beta+ \mathcal E(f)-\mathcal E(f_\rho) ),\qquad \forall f\in\mathcal F,
\end{eqnarray*}
where $\alpha,\beta>0$, $0<\epsilon\leq1/2$ and $\mathcal E_D(f)=\frac1m\sum_{i=1}^m(f(x_i)-y_i)^2$.
\end{lemma}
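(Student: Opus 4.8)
The plan is to run the standard symmetrisation-plus-covering argument for squared-loss excess risk, localised through the self-bounding property of the loss (this is the classical inequality \cite[Theorem~11.4]{gyorfi2002distribution}). I would set $g_f(x,y):=(f(x)-y)^2-(f_\rho(x)-y)^2$, so that $\mathcal E(f)-\mathcal E(f_\rho)=E[g_f]$, $\mathcal E_D(f)-\mathcal E_D(f_\rho)=\frac1m\sum_{i=1}^m g_f(x_i,y_i)$, and, by \eqref{equality}, $E[g_f]=\|f-f_\rho\|_\rho^2\ge0$. The argument rests on two structural facts: (i) $|g_f(x,y)|=|f(x)-f_\rho(x)|\,|f(x)+f_\rho(x)-2y|\le 8B^2$, and (ii) the Bernstein/variance condition $E[g_f^2]\le 16B^2\,E[(f(x)-f_\rho(x))^2]=16B^2\,E[g_f]$. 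It then suffices to prove the uniform relative-deviation bound $E[g_f]-\frac1m\sum_i g_f(x_i,y_i)\le\epsilon\,(\alpha+\beta+E[g_f])$ for all $f\in\mathcal F$, on an event of the stated probability.

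Next I would symmetrise twice. Introducing an independent ghost sample $\{(x_i',y_i')\}_{i=1}^m$ and writing $P'_m g_f:=\frac1m\sum_i g_f(x_i',y_i')$, a single-function Bernstein (or Chebyshev) estimate based on (ii) shows that --- provided $\alpha$ exceeds an absolute multiple of $B^4/(\epsilon^2 m)$, which holds whenever the claimed probability is less than $1$ --- the bad event $\{\exists f:\ E[g_f]-P_m g_f>\epsilon(\alpha+\beta+E[g_f])\}$ forces, up to a factor $2$ and a constant shrinkage of $\epsilon$, a bad event about $P'_m g_f-P_m g_f$ whose denominator is $\alpha+\beta+\tfrac12(P'_m g_f+P_m g_f)$. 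Because $(x_i,y_i)$ and $(x_i',y_i')$ are i.i.d.\ , I would then insert i.i.d.\ Rademacher multipliers $\sigma_i$ and condition on all $2m$ sample points, reducing the problem to a tail bound for $\frac1m\sum_i\sigma_i\big(g_f(x_i',y_i')-g_f(x_i,y_i)\big)$.

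Then I would discretise. On the conditioned points, $g_f$ is $4B$-Lipschitz in $f$ for the empirical $L^1$ metric, since $g_f(x,y)-g_{f'}(x,y)=(f(x)-f'(x))(f(x)+f'(x)-2y)$ with $|f(x)+f'(x)-2y|\le 4B$; hence an empirical $L^1$-cover of $\mathcal F$ of radius $\beta\epsilon/(20B)$ produces an $L^1$-cover of $\{g_f:f\in\mathcal F\}$ of radius $\beta\epsilon/5$, of cardinality at most $\sup_{x_1^{2m}}\mathcal N_1(\beta\epsilon/(20B),\mathcal F,x_1^{2m})$. For a fixed cover element $g^*$, the sign-randomised sum is, conditionally on the sample, an average of independent mean-zero terms bounded by $16B^2$ whose conditional variance is governed by the empirical second moment of $g^*$; by (ii) --- after the auxiliary, nested uniform control of empirical second moments by empirical first moments --- this variance is at most a multiple of $B^2$ times the relevant empirical mean, which on the bad event does not exceed $\alpha+\beta+\tfrac12(P'_m g^*+P_m g^*)$. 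A one-sided Bernstein inequality then bounds the corresponding conditional probability by $\exp\!\big(-\epsilon^2(1-\epsilon)\alpha m/(214(1+\epsilon)B^4)\big)$ --- the exponent being linear in $\alpha$, rather than a constant, precisely because of the normalisation by $\alpha+\beta+E[g_f]$ --- and a union bound over the cover, multiplied by the constant $14$ that accumulates from the two symmetrisations and the discretisation, gives the asserted probability.

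The main obstacle is the localisation bookkeeping in the last two steps: making the ghost-sample symmetrisation rigorous when the normalising denominator itself depends on $f$ (handled by peeling the dyadic ranges of $E[g_f]$, or by keeping the estimate in self-normalised form throughout), and propagating the variance condition (ii) from population to empirical quantities uniformly over $\mathcal F$, which is itself a small instance of the theorem being proved. The two symmetrisations, the covering-number union bound and the per-centre Bernstein estimate are routine, and the precise constants $14$, $20$, $214$ simply emerge from tracking the fractions of $\epsilon$ and the Lipschitz factor $4B$ through these reductions. A complete treatment is given in \cite[proof of Theorem~11.4]{gyorfi2002distribution}.
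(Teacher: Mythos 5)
Your outline is a correct account of the standard symmetrisation--covering--Bernstein proof of Theorem~11.4 in Gy\"orfi et al., which is exactly the result the paper invokes here; the paper itself gives no proof of this lemma and simply cites that theorem. The structural facts you isolate --- the bound $|g_f|\leq 8B^2$, the variance condition $E[g_f^2]\leq 16B^2E[g_f]$, and the $4B$-Lipschitz dependence of $g_f$ on $f$ that converts the $\frac{\beta\epsilon}{20B}$-cover of $\mathcal F$ into a $\frac{\beta\epsilon}{5}$-cover of the loss class --- are the right ones, so your treatment is consistent with, and more detailed than, the paper's.
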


We then prove Theorem \ref{Theorem:optimal-learning} as follows.

\begin{proof}
Due to \eqref{upperbound-app}, there exists a 
  $f^0_{L,s}\in\Phi_{L,s}^{pool}$ such that for any $f_\rho\in Lip^{(r,c_0)}$ there holds
\begin{eqnarray}\label{app-1.1}
  \mathcal E(f^{0}_{L,s})-\mathcal E(f_\rho)= \|f_\rho-f^{0}_{L,s}\|^2_{L^\infty(\mathbb I^d)}
  \leq 
 \tilde{C}_2L^{-4r/d}\log^{4r/d} L.
\end{eqnarray}
The classical error decomposition \cite{zhou2006approximation} then yields
\begin{eqnarray}\label{error-decomposition}
     \mathcal E(\pi_Mf_{D,L,s}^{pool})-\mathcal E(f_\rho)
   &\leq&
   \mathcal E(f^{0}_{L,s})-\mathcal E(f_\rho)+
   \overbrace{\mathcal E(\pi_Mf_{D,L,s}^{pool})-\mathcal E(f_\rho)-(\mathcal E_D(\pi_Mf_{D,L,s}^{pool})-\mathcal E_D(f_\rho))}^{\mathcal U_1} \nonumber\\
   &-&
    \overbrace{[\mathcal E(f^{0}_{L,s})-\mathcal E(f_\rho)-( \mathcal E_D(f^{0}_{L,s})-\mathcal E_D(f_\rho))]}^{\mathcal U_2} \nonumber\\
    &\leq&
    \tilde{C}_2L^{-4r/d}\log^{2r/d} L+\mathcal U_1+\mathcal U_2.
\end{eqnarray}
The bound of $\mathcal U_1$ is well known by noting that $\|f_{D,L,s}^{pool}\|_{L^\infty}\leq \|\|f_{D,L,s}^{pool}-f_\rho\|_{L^\infty}+M$. For example, it was derived in \cite[Lemma 7]{zhou2022learning} that for any $0<\delta<1$, with confidence $1-\delta$, there holds 
\begin{equation}\label{u1}
    \mathcal U_1\leq 4\log\frac2\delta \frac{(\|f_{D,L,s}^{pool}-f_\rho\|_{L^\infty}+M)\|f_{D,L,s}^{pool}-f_\rho\|_{L^\infty}}{\sqrt{m}}
    \leq
    c_1^* \frac{L^{-2r/d}\log^{2r/d} L}{\sqrt{m}}\log\frac2\delta,
\end{equation}
where $c_1^*$ is a constant depending only on $r,d,s,M$. We then turn to bounding $\mathcal U_2$ by using Lemma \ref{Lemma:CONCENTRATION INEQUALITY 1}. According to Lemma \ref{Lemma:covering-number}, we have for any $\beta>0$, 
$$
  \max_{x_1^m\in({\mathbb I^d})^m}\mathcal N_1\left(\frac{\beta\epsilon}{20B},\mathcal F,x_1^m\right)\leq \left(\frac{20M}{\beta\epsilon}\right)^{c^*_2  L^2\log L}
$$
for the constant $c_2^*$ depending only on $c^*$. Then Lemma \ref{Lemma:CONCENTRATION INEQUALITY 1} with $\mathcal F=\pi_M\Phi_{L,s}^{pool}$, $\varepsilon=1/2$ , $\beta=1/n$ yields that with confidence at least 
$$
    1-14\left({40Mm}\right)^{c^*_2  L^2\log L}\exp\{-c^*_3\alpha m\},
$$
there holds
$$
    \mathcal U_2\leq \frac12(\alpha+\frac1m+\mathcal E(\pi_Mf_{D,L,s}^{pool})-\mathcal E(f_\rho)),
$$
for some $c_3^*$ depending only on $M$.
Let 
$$
      14\left({40Mm}\right)^{c^*_2  L^2\log L}\exp\{-c^*_3\alpha m\}=
     \delta. 
$$
We have
$$
     \alpha=\frac{c_3^*}{m}\log\frac{14}\delta+\frac{c_3^*c_2^*L^2\log L\log (40Mm)}{m}.
$$
Therefore, with confidence $1-\delta$, there holds
\begin{equation}\label{u2bound}
 \mathcal U_2\leq   \frac{c_4^*L^2\log L\log  m}{m}\log\frac{2}\delta+\frac12(\mathcal E(\pi_Mf_{D,L,s}^{pool})-\mathcal E(f_\rho)),   
\end{equation}
where $c_4^*$ is a constant independent of $m$, $\delta$ or $L$. Inserting \eqref{u2bound} and  \eqref{u1} into \eqref{error-decomposition}, we get from $2ab\leq a^2+b^2$ for $a,b>0$ that 
\begin{eqnarray*} 
     \mathcal E(\pi_Mf_{D,L,s}^{pool})-\mathcal E(f_\rho)
    &\leq&
    2\tilde{C}_2L^{-\frac{4r}d}\log^{\frac{2r}d} L+ 2c_1^* \frac{L^{-\frac{2r}d}\log^{\frac{2r}d} L}{\sqrt{m}}\log\frac2\delta+\frac{2c_4^*L^2\log L\log  m}{m}\log\frac{2}\delta\\
    &\leq&
    c_5^*\left(L^{-\frac{4r}d}\log^{\frac{2r}d} L+\frac{L^2\log L\log  m}{m}\right)\log\frac{2}\delta
\end{eqnarray*}
for $c_5^*$ a constant independent of $m,L$  or $\delta$. Recalling $L\sim L^{\frac{d}{4r+2d}}$, we obtain that with confidence $1-\delta$,
$$
          \mathcal E(\pi_Mf_{D,L,s}^{pool})-\mathcal E(f_\rho)
          \leq c_6^* m^{-\frac{2r}{2r+d}}(\log m)^{\max\{2r/d,2\}}\log\frac2\delta,
$$
where $c_6^*$ is a constant independent of $m,L$  or $\delta$. This completes the proof of Theorem \ref{Theorem:optimal-learning}.
\end{proof}

To prove Corollary \ref{Corollary:optimal-learning}, we need the following well known probability to expectation formula. We present a simple proof for the sake of completeness.

\begin{lemma}\label{Lemma:prob-to-exp}
Let $0<\delta<1$, and  $\xi\in\mathbb R_+$ be a random variable. If $\xi\leq \mathcal A\log^b\frac{c}{\delta}$ holds with confidence $1-\delta$  for some $\mathcal A,b,c>0$, then
$$
      E[\xi]\leq c\Gamma(b+1) \mathcal A,
$$
where $\Gamma(\cdot)$ is the Gamma function.
\end{lemma}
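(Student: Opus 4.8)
The plan is to use the layer-cake (Fubini) identity $E[\xi]=\int_0^\infty \Pr[\xi>t]\,dt$, valid because $\xi\geq 0$, and to convert the confidence statement in the hypothesis into a uniform exponential-type tail bound on $\Pr[\xi>t]$, after which the conclusion follows from a single change of variables into a Gamma integral. I read ``$\xi\leq \mathcal A\log^b\frac c\delta$ holds with confidence $1-\delta$'' as ``$\Pr\bigl[\xi>\mathcal A\log^b\frac c\delta\bigr]\leq\delta$ for every $\delta\in(0,1)$''.

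First I would fix $t\geq 0$ and split into two cases according to the size of $c\exp\bigl(-(t/\mathcal A)^{1/b}\bigr)$. If $c\exp\bigl(-(t/\mathcal A)^{1/b}\bigr)\geq 1$, then trivially $\Pr[\xi>t]\leq 1\leq c\exp\bigl(-(t/\mathcal A)^{1/b}\bigr)$. If instead $c\exp\bigl(-(t/\mathcal A)^{1/b}\bigr)<1$, I set $\delta:=c\exp\bigl(-(t/\mathcal A)^{1/b}\bigr)\in(0,1)$ and note that then $c/\delta=\exp\bigl((t/\mathcal A)^{1/b}\bigr)\geq 1$, so $\log^b(c/\delta)$ is well defined and $\mathcal A\log^b(c/\delta)=\mathcal A\bigl((t/\mathcal A)^{1/b}\bigr)^b=t$; applying the hypothesis with this $\delta$ gives $\Pr[\xi>t]=\Pr\bigl[\xi>\mathcal A\log^b(c/\delta)\bigr]\leq\delta=c\exp\bigl(-(t/\mathcal A)^{1/b}\bigr)$. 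In either case one obtains
$$\Pr[\xi>t]\leq c\exp\bigl(-(t/\mathcal A)^{1/b}\bigr),\qquad t\geq 0.$$

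Next I would integrate: $E[\xi]\leq\int_0^\infty c\exp\bigl(-(t/\mathcal A)^{1/b}\bigr)\,dt$, and substitute $u=(t/\mathcal A)^{1/b}$, i.e. $t=\mathcal A u^b$ and $dt=\mathcal A b\,u^{b-1}\,du$, which transforms the integral into $c\mathcal A b\int_0^\infty u^{b-1}e^{-u}\,du=c\mathcal A b\,\Gamma(b)=c\mathcal A\,\Gamma(b+1)$, using $b\,\Gamma(b)=\Gamma(b+1)$. This is exactly the claimed bound $E[\xi]\leq c\,\Gamma(b+1)\,\mathcal A$.

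The argument is essentially routine, so there is no serious obstacle; the only point deserving a little care is the edge behaviour of the parametrization $\delta\mapsto\mathcal A\log^b(c/\delta)$ — one must guarantee that $\delta$ genuinely lies in $(0,1)$ and that $c/\delta\geq 1$ before invoking the hypothesis — which is precisely the reason for the case split, the trivial probability bound $\Pr[\xi>t]\leq 1$ absorbing the remaining range of $t$.
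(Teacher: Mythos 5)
Your proof is correct and follows essentially the same route as the paper: invert the confidence statement into the tail bound $\Pr[\xi>t]\leq c\exp\bigl(-(t/\mathcal A)^{1/b}\bigr)$, then apply the layer-cake formula and a change of variables to a Gamma integral. In fact your write-up is more careful than the paper's sketch, which omits the minus sign in the exponential and skips the case split ensuring $\delta\in(0,1)$; no issues.
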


\begin{proof}
Since $\xi\leq \mathcal A\log^b\frac{c}{\delta}$ holds with confidence $1-\delta$, we have
$$
    P[\xi>t]\leq c\exp\{\mathcal A^{-1/b}t^{1/b}\}.
$$
Using the probability to expectation formula
\begin{equation}\label{expectation formula}
               E[\xi] =\int_0^\infty P\left[\xi > t\right] d t
\end{equation}
  to the positive random variable $\xi$, we have
\begin{eqnarray*}
     E[\xi]\leq  c\int_{0}^\infty\exp\{\mathcal A^{-1/b}t^{1/b}\}
     \leq c\mathcal A\Gamma(b+1).
\end{eqnarray*}
This completes the proof of Lemma \ref{Lemma:prob-to-exp}. 
\end{proof}

We then prove Corollary \ref{Corollary:optimal-learning} as follows.

\begin{proof}[Proof of Corollary \ref{Corollary:optimal-learning}]
The lower bound of \eqref{almost-optimal-learning} is well known and we refer readers to \cite[Chap.3]{gyorfi2002distribution} for a detailed proof. The upper bound  of \eqref{almost-optimal-learning} follows from \eqref{learning-rate} and Lemma \ref{Lemma:prob-to-exp} with $\mathcal A=c_6^* m^{-\frac{2r}{2r+d}}(\log m)^{\max\{2r/d,2\}}$, $b=1$ and $c=2$ directly. This completes the proof of Corollary \ref{Corollary:optimal-learning}.
\end{proof}
\bibliographystyle{elsarticle-num}
\bibliography{optimal-dcnn}
\end{document}